\theoremstyle{definition}
\theoremstyle{plain}
\newtheorem{proposition}{Proposition}
\newtheorem{lemma}[proposition]{Lemma}
\newtheorem{theorem}[proposition]{Theorem}
\begin{document}
\title{Equilibrium Points of an AND-OR Tree: under Constraints on Probability
}
\author{Toshio Suzuki
${}^1$\thanks{
Corresponding author. 
This work was partially supported by Japan Society for the Promotion of Science (JSPS) KAKENHI (C) 22540146 and (B) 23340020.}
and  
Yoshinao Niida ${}^2$\thanks{
The current affiliation is Patent Result Co., Ltd., Japan.
}
\\ 
Department of Mathematics and Information Sciences, \\ 
Tokyo Metropolitan University, \\ 
Minami-Ohsawa, Hachioji, Tokyo 192-0397, Japan\\
1: toshio-suzuki@tmu.ac.jp
\quad 
2: yn.sputnik@gmail.com 
}
\date{\today}

\maketitle              

\begin{abstract}
We study a probability distribution $d$ on the truth assignments to a uniform binary AND-OR tree. Liu and Tanaka [2007, Inform. Process. Lett.] showed the following: If $d$ achieves the equilibrium among independent distributions (ID) then $d$ is an independent identical   distribution (IID). We show a stronger form of the above result. Given a real number $r$ such that $0 < r < 1$, we consider a constraint that the probability of the root node having the value 0 is $r$. Our main result is the following: When we restrict ourselves to IDs satisfying this constraint, the above result of Liu and Tanaka still holds. The proof employs clever tricks of induction. In particular, we show two fundamental relationships between expected cost and probability in an IID on an OR-AND tree: (1) The ratio of the cost to the probability (of the root having the value 0) is a decreasing function of the probability $x$ of the leaf. (2) The ratio of derivative of the cost to the derivative of the probability is a decreasing function of $x$, too.

Keywords: AND-OR tree; OR-AND tree; independent identical distribution; computational complexity. 
\end{abstract}

\section{Introduction}

The alpha-beta pruning is a well-known algorithm for tree searching. 
Knuth and Moore \cite{KM75} pioneered analysis of the alpha-beta pruning. 
Baudet \cite{Ba78} and Pearl \cite{Pe80} has studied optimality of alpha-beta pruning in the case where values of terminal nodes are independent identically distributed. 
The optimality is established by Pearl \cite{Pe82} and Tarsi \cite{Ta83}. 
For more on important early works, see the references of \cite{Pe82}. 

We are interested in the case of a uniform binary tree such that each leaf is bi-valued. 
In this special case, a mini-max tree makes a binary AND-OR tree; And, the alpha-beta pruning 
is described in a simple way. Given an AND-node (an OR-node, respectively) $v$, if we know a child of $v$ has the value 0 (1) then we know that $v$ has the same value, without probing the other child. Here, a cut-off (or, a skip) happens at the other child. 

And, we are interested in the case where an associated probability distribution $d$ on the truth assignments to the leaves is an independent distribution (ID) but $d$ is not necessarily an independent identical distribution (IID).  
Here, an IID denotes an ID such that all the leaves have the same probability of having the value 0.  

Yao's principle \cite{Ya77}, a variation of von-Neumann's minimax theorem, is useful for analyzing equilibriums of AND-OR trees. 
Saks and Wigderson \cite{SW86} establish basic results on the equilibriums. 
Liu and Tanaka \cite{LT07} have extended the works of Yao and Saks-Wigderson.  
And, they study the eigen-distribution, the distribution achieving the equilibrium. 
In the course of their study, Liu and Tanaka showed the following.  

\begin{theorem} \label{thm:lt4}
 $($Liu and Tanaka, Theorem 4 of $\cite{LT07})$  
If $d$ is an eigen-distribution with respect to IDs then $d$ is an IID. 
\end{theorem}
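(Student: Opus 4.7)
The plan is induction on the height $h$ of the tree $T$. Let $d$ be an eigen-ID with leaf $0$-probabilities $(p_i)$; I want to show all $p_i$ agree. The approach is an ``atomic symmetrization'' argument propagated from the leaves up: two sibling probabilities (of leaves at the bottom, or of sibling subtrees higher up) can be equalized while preserving the $0$-probability of the enclosing gate and weakly increasing the optimal expected cost below that gate, strictly so unless they were already equal. Propagating such symmetrizations converts $d$ into an IID with cost at least that of $d$; since $d$ is eigen the cost cannot have strictly increased, so every symmetrization step was vacuous and $d$ was already IID.

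The atomic step at the bottom is a short calculation. For an AND gate over two leaves with $0$-probabilities $p_1, p_2$, the optimal deterministic algorithm probes the larger-$p_i$ leaf first, giving cost $2 - \max(p_1,p_2)$, while the gate's $0$-probability is $q = 1 - (1-p_1)(1-p_2)$. Holding $(1-p_1)(1-p_2)$ fixed, $\max(p_1,p_2)$ is strictly minimized at $p_1=p_2$, so the cost strictly increases upon symmetrization; the OR case is dual.

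For the inductive step at height $h \geq 2$, consider the root (say OR) with subtree children $T_L, T_R$ of height $h-1$. An optimal algorithm on an AND-OR tree with independent leaves is directional, so the root-level expected cost depends on each subtree only through its own (optimal-cost, root-$0$-probability) pair. I therefore strengthen the induction hypothesis to the constrained statement: for a height-$(h-1)$ subtree and any fixed value $q \in (0,1)$ of the subtree root-$0$-probability, the maximum subtree cost over IDs achieving that $q$ is attained uniquely by an IID. Applied inside each subtree, this lets me replace $d|_{T_L}$ and $d|_{T_R}$ by IIDs with the same subtree root-probabilities and no loss in subtree cost; hence both subtrees carry IIDs, with parameters $x_L$ and $x_R$, and it remains to show $x_L = x_R$.

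Let $c(x), q(x)$ denote the IID subtree cost and subtree root-$0$-probability as functions of the common leaf parameter. The root-level optimal cost is $\min\{c(x_L)+q(x_L)c(x_R),\; c(x_R)+q(x_R)c(x_L)\}$. Left--right symmetry guarantees the existence of a diagonal maximizer, but ruling out off-diagonal maximizers is the main obstacle. This is where I expect to use the two monotonicity facts previewed in the abstract, that $c(x)/q(x)$ and $c'(x)/q'(x)$ are decreasing in $x$: combined with first-order conditions on the piecewise smooth min-expression, they force any maximizer to satisfy $x_L = x_R$. Proving these two ratio-monotonicity lemmas --- themselves by induction on $h$ --- is the real technical core; once they are in hand, the strengthened induction hypothesis closes and the main statement follows.
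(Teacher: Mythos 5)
Your skeleton is essentially the paper's: strengthen to the statement constrained by the root probability (Theorem~\ref{thm:main}), induct on height, reduce the two-subtree problem to a constrained extremum problem in the children's probabilities, and invoke the monotonicity of $c(x)/p(x)$ and $c'(x)/p'(x)$ (Lemmas~\ref{lem:1} and~\ref{lem:2}). However, as it stands the proposal does not yet prove Theorem~\ref{thm:lt4}: the two lemmas, and the deduction that they force a diagonal maximizer, are exactly what you defer (``first-order conditions on the piecewise smooth min-expression''), and that deduction is not automatic --- the paper's Lemma~\ref{lem:cep1} parametrizes the constraint curve by $z \mapsto (z, \omega(z))$, uses Lemma~\ref{lem:1} to get $c'(u)\,u - c(u) < 0$, uses Lemma~\ref{lem:2} to compare $c'(z) \le c'(\omega(z))$, and concludes $f_1'(z) < 0$ off the diagonal. (Also, symmetry of the objective does not by itself guarantee that a diagonal maximizer exists; that, too, only comes out of this computation.)

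Beyond the deferred core there are two concrete gaps. First, your strengthened induction hypothesis applies only when a subtree's root probability lies in $(0,1)$, but at a putative eigen-ID one child's probability may sit on the boundary (e.g.\ $w=0$ under an AND root, or a child of probability $1$ under an OR root with $r<1$); there the hypothesis is silent and the subtree restriction need not be an IID at all. The missing idea is the replacement argument of Proposition~\ref{prop:4main}: a subtree whose root value is deterministic can be exchanged for a constant IID without changing the minimum cost, after which the extremum problem excludes the boundary configuration. Second, you never bridge from the constrained statement back to the unconstrained Theorem~\ref{thm:lt4}: one must show that an eigen-ID over \emph{all} IDs has root probability strictly between $0$ and $1$, i.e.\ that distributions with deterministic root value are strictly beaten. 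This is not a formality; it is the entire content of Section~\ref{section:original}. For even height it follows from concavity of the cost together with Proposition~\ref{prop:4main}, but for odd height the paper needs a separate induction establishing $c_{\vee,2k+1}(x) > 2^{k+1}$ on an explicit interval determined by a degree-six polynomial. Your closing claim that ``the main statement follows'' hides precisely this work.
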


They write ``it is not hard" to show the theorem, and omit the proof. 

In this paper, we show a stronger form of the above theorem. 
Throughout the paper, a probability of a given node denotes the probability that the node has the value 0. 

\vspace{\baselineskip}

\noindent
\textbf{Main Theorem} (Theorem~\ref{thm:main})
\textit{
Suppose that $r$ is a real number such that $0 < r < 1$. 
Suppose that we restrict ourselves to distributions such that the probability of the root is $r$. Under this constraint, Theorem~\ref{thm:lt4} still holds. 
}

\vspace{\baselineskip}

Our proof of Theorem~\ref{thm:main} employs clever tricks of induction. 
In particular, we show the following two lemmas by induction. 

\vspace{\baselineskip}

\noindent
\textbf{Lemma}~\ref{lem:1}
\textit{
Suppose that an OR-AND tree is given. 
Given an $x$ $(0 < x < 1)$, we consider an IID such that each leaf has probability $x$. Then, the following quantity is a decreasing function of $x$. Here, the numerator $($the denominator$)$ denotes the expected value of the cost $($the probability of the root, respectively$)$. 
}

\begin{equation*}
\frac{\mathrm{cost} (x)}{\mathrm{prob} (x)}
\end{equation*}

\noindent
\textbf{Lemma}~\ref{lem:2}
\textit{
Under the same assumption of Lemma~\ref{lem:1}, 
the following quantity is a decreasing function of $x$. 
Here, the primes denote differentiation.
}
\begin{equation*}
\frac{\mathrm{cost}^\prime (x)}{\mathrm{prob}^\prime (x)}
\end{equation*}

Lemmas~\ref{lem:1} and \ref{lem:2} describe fundamental relationships between the cost and the probability in an IID on an OR-AND tree. 
In section~\ref{section:lemmas}, we prove these lemmas. 
In section~\ref{section:main}, we prove Theorem~\ref{thm:main} by using Lemmas~\ref{lem:1} and \ref{lem:2}. 
We show Theorem~\ref{thm:lt4} as a corollary to Theorem~\ref{thm:main} in section~\ref{section:original}. 
The main motive for Theorem~\ref{thm:lt4} is the following (Theorem 9 of \cite{LT07})
: The equilibrium among all IDs is strictly smaller than the equilibrium among all ditributions.  
In section~\ref{section:comparison}, we observe that the above result still holds under the constraint on the probability of the root. 
In section~\ref{section:conclusiveremarks}, we discuss whether Theorem~\ref{thm:lt4} is really ``not hard". We observe that a brutal induction does not work for the proof of Theorem~\ref{thm:lt4}. 
 
\section{Notation and Conventions}

A tree is said to be an \emph{AND-OR tree}\/ (an \emph{OR-AND tree}, respectively) if the root is an AND-node (an OR-node, respectively), and OR layers and AND layers alternate. A \emph{leaf}\/ means a terminal node. 
Throughout the paper, unless specified, a tree is assumed to be a uniform binary tree. In other words, every internal node has just 2 child nodes, and all the leaves have the same distance from the root. 

Figure~\ref{fig:1} shows an example of an AND-OR tree of height 2. If we exchange the roles of AND-gates ($\wedge$) and those of OR-gates ($\vee$), the resulting tree is an OR-AND tree of height 2. 

\begin{figure}[H] 
\begin{center}
\includegraphics[width=.2\linewidth]{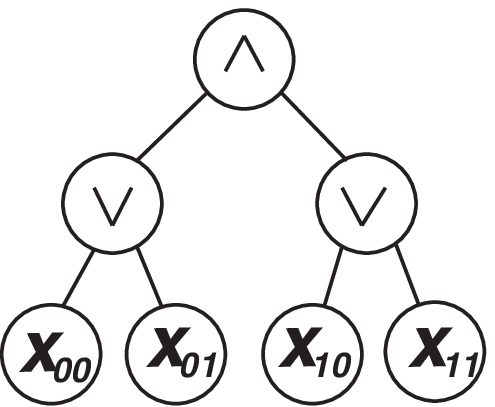}
\caption{AND-OR tree of height 2 \label{fig:1}}
\end{center}
\end{figure}

A \emph{truth assignment}\/ for a tree is a mapping from the set of all leaves to $\{ 0, 1 \}$. Here, 0 stands for FALSE and 1 stands for TRUE. 
An example of a truth assignment for the tree in Figure~\ref{fig:1} is as follows: $f(x_{00})=f(x_{01})=f(x_{01})=1$ and $f(x_{11})=0$. 
Throughout the paper, a distribution denotes a probability distribution on the truth assignments for a given tree. 

We consider algorithms finding the value of the root. An algorithm makes some queries to leaves. We concentrate on deterministic algorithms: An algorithm does not perform coin tossing. An algorithm may be directional or un-directional. An algorithm is said to be \emph{directional}\/ if for some linear arrangement of the leaves it never selects for examination a node situated to the left of a previously examined node \cite[p.121]{Pe80}. Otherwise, the algorithm is \emph{un-directional}. 

An \emph{alpha-beta pruning algorithm}\/ for an AND-OR tree (or an OR-AND tree) is defined  in Introduction.  
Throughout the paper, a deterministic algorithm is assumed to be a deterministic alpha-beta pruning algorithm on a given tree. 

Given an algorithm and a truth assignment, their \emph{cost}\/ denotes the number of leaves probed during the computation. 
Given an algorithm $A_D$ and a distribution $d$, their \emph{cost}\/ denotes the expected value of the cost. We denote the cost by $C(A_D, d)$. 

Given a class $B$ of distributions, a distribution $d_0 \in B$ is an \emph{eigen-distribution}\/ with respect to $B$ if $d$ is a maximizer of the minimum cost \cite{LT07}. To be more precise, if the following holds. 
\begin{equation}
\min_{A_D} C(A_D, d_0) = \max_d \min_{A_D} C(A_D, d) 
\end{equation}
Here, $A_D$ runs over all deterministic algorithms. And, $d$ runs over all elements of $B$. 

Throughout the paper, unless specified, in expressions such as $\min_{A_D}$ and $\max_{A_D} $, $A_D$ runs over all deterministic algorithm on a given tree. 

Let $h$ be a positive integer, and $T$ a uniform binary AND-OR tree of height $h$.  
Given a real number $x$ ($0 \leq x \leq 1$), we consider an ID on $T$ such that each leaf has probability (having the value 0) $x$. 
Then,  $p_{\wedge, h} (x)$ ($c_{\wedge, h} (x)$, respectively) denotes the probability (the expected cost, respectively) of the root. 

Given a uniform binary OR-AND tree of height $h$, we define $p_{\vee, h} (x)$ and $c_{\vee, h} (x)$ in the same way.

\section{Relationships between Costs and Probabilities} \label{section:lemmas}

\begin{lemma} \label{lem:1} 
Suppose that $h$ is a positive integer. 
Then, the following quantity is a decreasing function of $x$ $(0 < x < 1)$. 
\begin{equation}
\frac{ c_{\vee, h} (x)}{ p_{\vee, h}  (x)}
\end{equation}
\end{lemma}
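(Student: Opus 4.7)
The plan is to prove Lemma~\ref{lem:1} by induction on $h$, carried out simultaneously with the parallel statements that $c_{\wedge, h}(x)/p_{\wedge, h}(x)$ is a positive decreasing function of $x$ on $(0,1)$ and that both $p_{\vee, h}(x)$ and $p_{\wedge, h}(x)$ are increasing on $(0,1)$. Since we are dealing with an IID, the left and right subtrees of any internal node are independent and identically distributed, so the natural directional alpha-beta pruning yields the one-step recurrences
\begin{align*}
p_{\vee, h}(x) &= p_{\wedge, h-1}(x)^2, & c_{\vee, h}(x) &= c_{\wedge, h-1}(x)\bigl(1 + p_{\wedge, h-1}(x)\bigr),\\
p_{\wedge, h}(x) &= p_{\vee, h-1}(x)\bigl(2 - p_{\vee, h-1}(x)\bigr), & c_{\wedge, h}(x) &= c_{\vee, h-1}(x)\bigl(2 - p_{\vee, h-1}(x)\bigr).
\end{align*}

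The key observation is the cancellation
\begin{equation*}
\frac{c_{\wedge, h}(x)}{p_{\wedge, h}(x)} \;=\; \frac{c_{\vee, h-1}(x)\bigl(2 - p_{\vee, h-1}(x)\bigr)}{p_{\vee, h-1}(x)\bigl(2 - p_{\vee, h-1}(x)\bigr)} \;=\; \frac{c_{\vee, h-1}(x)}{p_{\vee, h-1}(x)},
\end{equation*}
so the AND-OR ratio at height $h$ coincides with the OR-AND ratio one level down. Feeding this back into the OR-AND recurrences, one obtains the factorization
\begin{equation*}
\frac{c_{\vee, h}(x)}{p_{\vee, h}(x)} \;=\; \frac{c_{\wedge, h-1}(x)}{p_{\wedge, h-1}(x)} \cdot \left(1 + \frac{1}{p_{\wedge, h-1}(x)}\right),
\end{equation*}
and the inductive hypothesis makes both factors positive and decreasing (the first directly; the second because $p_{\wedge, h-1}(x)$ is positive and increasing by induction), whence their product is decreasing.

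For the base case $h=1$, one has $p_{\vee, 1}(x) = x^2$ and $c_{\vee, 1}(x) = 1 + x$, giving $c_{\vee, 1}/p_{\vee, 1} = (1+x)/x^2$, whose derivative $-(x+2)/x^3$ is negative on $(0,1)$; likewise $p_{\wedge, 1}(x) = x(2-x)$ and $c_{\wedge, 1}(x) = 2 - x$, so $c_{\wedge, 1}/p_{\wedge, 1} = 1/x$, which is decreasing. Monotonicity of the probabilities propagates upward through the recurrences using that $u \mapsto u^2$ and $u \mapsto u(2-u)$ are increasing on $(0,1)$.

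The main obstacle, in my view, is spotting the clean cancellation that collapses $c_{\wedge, h}/p_{\wedge, h}$ onto $c_{\vee, h-1}/p_{\vee, h-1}$. Without it, one is forced to treat $c_{\vee, h}/p_{\vee, h}$ as a rational function built from $p_{\vee, h-2}$ and verify monotonicity by direct differentiation, which quickly becomes unwieldy as $h$ grows. With the identity in hand, the rest reduces to the elementary fact that a product of positive decreasing functions is decreasing.
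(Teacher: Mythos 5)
Your proof is correct and takes essentially the same route as the paper: an induction on height in which the ratio at the larger height is written as the ratio at a smaller height multiplied by a factor of the form $1 + 1/(\mbox{a positive increasing probability expression})$, so that it is a product of positive decreasing functions. The only difference is bookkeeping: the paper inducts in steps of two levels (its identity for $c_{\vee,h+2}/p_{\vee,h+2}$ in terms of $c_{\vee,h}/p_{\vee,h}$ with the factor $1+\frac{1}{p_{\vee,h}(x)(2-p_{\vee,h}(x))}$), whereas you go one level at a time via the cancellation $c_{\wedge,h}/p_{\wedge,h}=c_{\vee,h-1}/p_{\vee,h-1}$; composing your two one-level steps reproduces the paper's two-level identity exactly.
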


\begin{proof} 
The cases of height = 1, 2 are easy. 

As a preliminary observation for an induction step, consider an OR-AND tree of height 2. And, consider an IID in which each leaf has probability $x$, where $0 < x < 1$. 
Then, the cost and probability are given as follows. 

\begin{align}
c_{\vee, 2} (x) &= (2 - x) ( - x^2 + 2 x + 1) \label{eq:orandh2cost}
\\
p_{\vee, 2} (x) &= x^2 ( x - 2 )^2 \label{eq:orandh2prob}
\end{align}

Now, we are going into an induction step. Let $h$ be a positive integer. 
We consider an OR-AND tree of height $h+2$. By \eqref{eq:orandh2cost} and \eqref{eq:orandh2prob}, we have the followings. 

\begin{align}
c_{\vee, h+2} (x) &= (2 - p_{\vee, h} (x)) ( - p_{\vee, h} (x)^2 + 2 p_{\vee, h} (x) + 1) \times c_{\vee, h} (x) \label{eq:orandhplus2cost}
\\
p_{\vee, h+2} (x) &= p_{\vee, h} (x)^2 (p_{\vee, h} (x) - 2)^2
\label{eq:orandhplus2prob}
\end{align}

Therefore, the following holds. 

\begin{equation} \label{eq:lastformulalem3}
\frac{c_{\vee, h+2} (x) }{p_{\vee, h+2} (x) } = 
\Bigl{(}
1 + 
\frac{1}{ p_{\vee, h} (x) (2 - p_{\vee, h} (x)) }
\Bigr{)}
\times
\frac{c_{\vee, h} (x) }{p_{\vee, h} (x) } 
\end{equation}

Both of the two factors in the right-hand side are positive. 
When $x$ varies from 0 to 1, $p_{\vee, h} (x)$ is an increasing function that varies from 0 to 1. 
Thus, the first factor is decreasing. 
And, the second factor is decreasing, by the induction hypothesis. 
Hence, the left-hand side of \eqref{eq:lastformulalem3} is decreasing. 
\end{proof}

Consider the dual of the tree. Then the numerators of both sides of \eqref{eq:prop1a} are the same. And, the same thing holds for the denominators. 
Therefore, the followings hold for $x$ such that $0 < x < 1$, where primes ($\prime$) denote differentiation. And, $c_{\vee, h}^\prime (1 - x)$ denotes  $(d c_{\vee, h} (t) /dt)|_{t=1-x}$.  

\begin{align}
\frac{c_{\wedge, h} (x) }{ 1 - p_{\wedge, h} (x) } 
&= 
\frac{c_{\vee, h} (1 - x) }{p_{\vee, h} (1 - x) } \label{eq:prop1a}
\\
\frac{c_{\wedge, h}^\prime (x) }{ ( 1 - p_{\wedge, h} (x) )^\prime } 
&= 
\frac{c_{\vee, h}^\prime (1 - x) }{p_{\vee, h}^\prime (1 - x) } \label{eq:prop1b}
\end{align}

\begin{lemma} \label{lem:2}
Suppose that $h$ is a positive integer. 
Then, the following quantity is a decreasing function of $x$ ($0 < x < 1$). 
\begin{equation}
\frac{ c_{\vee, h}^\prime (x)}{ p_{\vee, h}^\prime   (x)}
\end{equation}
\end{lemma}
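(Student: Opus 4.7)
I plan to prove Lemma~\ref{lem:2} by induction on $h$ using the same two-step recursion ($h \to h+2$) that underlies the proof of Lemma~\ref{lem:1}. The base cases are handled directly: for $h=1$ one has $R_1(x) = 1/(2x)$, clearly decreasing; and for $h=2$, differentiating \eqref{eq:orandh2cost} and \eqref{eq:orandh2prob} yields $R_2(x) = (3x^2 - 8x + 3)/[4x(1-x)(2-x)]$, whose decreasingness is a short quotient-rule check. For the induction step, differentiating \eqref{eq:orandhplus2cost} and \eqref{eq:orandhplus2prob} and setting $p := p_{\vee, h}(x)$, $S := c_{\vee, h}(x)/p_{\vee, h}(x)$, $R := R_h(x)$, yields
\begin{equation*}
R_{h+2}(x) = \Phi(p)\, S + \Psi(p)\, R, \qquad
\Phi(p) = \frac{3p^2 - 8p + 3}{4(1-p)(2-p)},\ \
\Psi(p) = \frac{1 + 2p - p^2}{4p(1-p)}.
\end{equation*}
Lemma~\ref{lem:1} gives that $S$ is decreasing in $x$ and, via $S^\prime = (p_{\vee, h}^\prime/p_{\vee, h})(R - S) \leq 0$, the inequality $R \leq S$ on $(0,1)$.

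The crux is to show $R_{h+2}^\prime \leq 0$. After differentiating and using the identity for $S^\prime$, the expression becomes
\begin{equation*}
R_{h+2}^\prime = p_{\vee, h}^\prime \cdot \bigl[\, M(p)\, S + N(p)\, R\, \bigr] + \Psi(p)\, R^\prime,
\end{equation*}
where $M(p) := \Phi^\prime(p) - \Phi(p)/p$ and $N(p) := \Psi^\prime(p) + \Phi(p)/p$. A direct calculation produces the key identity
\begin{equation*}
M(p) + N(p) = \Phi^\prime(p) + \Psi^\prime(p) = -\frac{1-p}{p^2(2-p)^2},
\end{equation*}
strictly negative on $(0,1)$; a routine check gives $M(p) < 0$ and $\Psi(p) > 0$ there. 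The final term $\Psi(p)\, R^\prime$ is non-positive by the induction hypothesis, and using $R \leq S$ one decomposes $M S + N R$ as $M(S-R) + (M+N)R$ when $R \geq 0$, or as $N(R-S) + (M+N)S$ when $N(p) \geq 0$, in either of which the sum is non-positive. The one sign configuration that escapes both groupings is $N(p) < 0$ together with $R(x) < 0$.

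The main obstacle is this residual case, and my intended resolution is to strengthen the induction with an auxiliary invariant: $R_h(x) \geq 0$ whenever $p_{\vee, h}(x) \leq p^*$, where $p^*$ is the unique root in $(0,1)$ of the quartic numerator of $N(p)$. The invariant is verified by inspection at $h = 1, 2$, and it propagates to $h+2$ because $\phi(p) := p^2(2-p)^2$ is strictly increasing on $(0,1)$ with $\phi(p^*) > p^*$ (equivalent, via the factorization $p^3 - 4p^2 + 4p - 1 = (p-1)(p^2 - 3p + 1)$, to $p^* > (3-\sqrt{5})/2$, which one verifies by evaluating the quartic at $(3-\sqrt{5})/2$ using $p^2 = 3p - 1$); hence $\phi^{-1}(p^*) =: p_0 < p^*$. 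At $p_0$ one has $\Phi(p_0), \Psi(p_0) > 0$ and by the invariant $R_h \geq 0$ at the corresponding $x$, so the recursion forces $R_{h+2} \geq 0$ there; the decreasingness of $R_{h+2}$ just established then extends the invariant to all $x$ with $p_{\vee, h+2}(x) \leq p^*$. With the invariant in force, the problematic case is excluded, and the induction closes.
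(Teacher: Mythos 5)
Your proof is correct, but it takes a genuinely different route from the paper's. The paper proves Lemma~\ref{lem:2} by a one-step induction $h \to h+1$ across the duality: setting $z = p_{\wedge,h}(x)$ and $c(z) = c_{\wedge,h}(p_{\wedge,h}^{-1}(z))$, it uses $c_{\vee,h+1} = c(z)(1+z)$ and $p_{\vee,h+1} = z^2$, so the whole step reduces to the two sign facts $c^{\prime\prime}(z) < 0$ and $c^\prime(z)(z-1) - c(z) < 0$, obtained from the induction hypothesis and Lemma~\ref{lem:1} via \eqref{eq:prop1b} and \eqref{eq:prop1a}; no troublesome sign configuration ever appears there. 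You instead recurse two levels at a time inside the $\vee$-rooted functions, which produces the explicit coefficients $\Phi,\Psi$ and, because $N$ changes sign at $p^*$ while $R_h$ itself changes sign, forces you to strengthen the induction hypothesis with the invariant $R_h \ge 0$ on $\{x : p_{\vee,h}(x) \le p^*\}$. I verified your key computations: $R_{h+2} = \Phi(p)S + \Psi(p)R$ with your $\Phi,\Psi$; $M+N = \Phi^\prime + \Psi^\prime = -(1-p)/(p^2(2-p)^2)$; the numerator of $M$ is $-(3p^4 - 16p^3 + 27p^2 - 18p + 6)$, whose positivity on $(0,1)$ is exactly the same quartic check that makes $R_2$ decreasing; $q(p) = -3p^4 + 10p^3 - 11p^2 + 8p - 2$ is strictly increasing on $(0,1)$, so $p^*$ is unique and $N<0$ precisely on $(0,p^*)$; and $q((3-\sqrt{5})/2) = 3 - 8(3-\sqrt{5})/2 = 4\sqrt{5}-9 < 0$, giving $\phi(p^*) > p^*$. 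The propagation is not circular: decreasingness of $R_{h+2}$ is derived first (using only the level-$h$ invariant to kill the residual case), and only then used to spread the level-$(h+2)$ invariant to the left of $x_0$. One step you should make explicit: $\Phi(p_0) > 0$ (and likewise the $h=2$ base case of the invariant) needs $p^* < (4-\sqrt{7})/3$; this follows by evaluating $q$ at $\gamma = (4-\sqrt{7})/3$ using $3\gamma^2 = 8\gamma - 3$, which gives $q(\gamma) = (6 - 10\gamma)/9 > 0$. In exchange for being heavier and needing the auxiliary invariant, your argument never leaves the OR-AND recursion (no duality step) and exposes an explicit threshold $p^*$ below which $c_{\vee,h}$ is still increasing, whereas the paper's route buys brevity and a residual-case-free induction.
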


\begin{proof} 
The case of height = 1 is easy. 
As an induction step, we consider an OR-AND tree of height $h+1$. 
The goal is to show that the following quantity is decreasing. 
\begin{equation} \label{eq:goaloflem2a}
\frac{c_{\vee, h+1}^\prime (x) }{ p_{\vee, h+1}^\prime (x) } 
\end{equation}

Let $z := p_{\wedge, h} (x)$. Thus, $z$ is the probability of the node just under the root. 
Therefore, we have the following. 

\begin{equation} \label{eq:goaloflem2b}
\frac{c_{\vee, h+1}^\prime (x) }{ p_{\vee, h+1}^\prime (x) } 
=
\frac{ \dfrac{d c_{\vee, h+1} }{dz} }{ \quad \dfrac{d p_{\vee, h+1}}{dz}  \quad} 
\end{equation}

Here, the numerator (the denominator) of the right-hand side denotes the derivative of 
$c_{\vee, h+1} ( p_{\wedge, h}^{-1} (z) )$ 
(the derivative of $p_{\vee, h+1} ( p_{\wedge, h}^{-1} (z) )$, respectively). 

Now, define a function $c(z)$ ($0 < z < 1$) as follows. 

\begin{equation}
c(z) := c_{\wedge, h} ( p_{\wedge, h}^{-1} (z) ) 
\end{equation}

Then, the followings hold. 

\begin{align}
c_{\vee, h+1} ( p_{\wedge, h}^{-1} (z) ) &= c(z) (1+z) \label{eq:cz1plusz}
\\
p_{\vee, h+1} ( p_{\wedge, h}^{-1} (z) ) &= z^2 \label{eq:zsquare}
\end{align}

Now, we look at the following quantity. 

\begin{equation} \label{eq:ddzfraction}
\frac{d}{dz} \Biggl{(} 
\frac
{\dfrac{ d c_{\vee, h+1} }{dz}}
{\quad \dfrac{ d p_{\vee, h+1} }{dz} \quad} 
\Biggr{)}
=
\frac{[ \cdots ]}{ \Bigl{(} \dfrac{ d p_{\vee, h+1} }{dz} \Bigr{)}^2 }
\end{equation}

By \eqref{eq:cz1plusz} and \eqref{eq:zsquare}, 
the numerator $[ \cdots ]$ of the right-hand side equals to  the following. 

\begin{align}
&\{ c(z) (1+z) \}^{\prime \prime} (z^2)^\prime - \{ c(z) (1+z) \}^\prime (z^2)^{\prime \prime}   
\notag \\
=
&2\{
c^{\prime \prime} (z) (1+z) z +  c^\prime (z) (z - 1) - c(z) 
\} \label{eq:lemkey1}
\end{align}

In the remainder of the proof, we are going to show that the right-hand side of \eqref{eq:lemkey1} is negative. 
By the induction hypothesis and \eqref{eq:prop1b}, the following holds. 

\begin{equation} \label{eq:lem3indhyp}
\frac{d}{dx} \Bigl{(} 
\frac{ c_{\wedge, h}^\prime (x) }{ ( 1 - p_{\wedge, h} (x) )^\prime } 
\Bigr{)}
> 0
\end{equation}

Recall that $dz/dx = p_{\wedge, h}^\prime (x) > 0$ for all $x$ such that $0 < x < 1$. 
Therefore, derivative of a given function by $x$ is positive if and only if derivative of it by $z$ is positive.  
Hence, by \eqref{eq:lem3indhyp}, the followings hold. 

\begin{equation} 
\frac{d}{dz} (- c^\prime (z)) 
= 
\frac{d}{dz} \Biggl{(} 
\frac{ c^\prime (z)}{ \quad \dfrac{d(1-z)}{dz} \quad} \Biggr{)} 
= 
\frac{d}{dz} \Bigl{(} 
\frac{ c_{\wedge, h}^\prime (x) }{ (1 - p_{\wedge, h} (x) )^\prime } 
\Bigr{)}
> 0
\end{equation}

Therefore, we get the following. 

\begin{equation} \label{eq:lemkey2}
c^{\prime \prime} (z) < 0
\end{equation}

On the other hand, by Lemma~\ref{lem:1} and \eqref{eq:prop1a}, the following holds. 

\begin{equation} 
\frac{d}{dx} \Bigl{(} 
\frac{ c_{\wedge, h} (x) }{ 1 - p_{\wedge, h} (x) } 
\Bigr{)}
> 0
\end{equation}

Therefore, we get the following. 

\begin{equation} 
\frac{d}{dz} \Bigl{(} 
\frac{ c (z) }{ 1 - z } 
\Bigr{)}
> 0
\end{equation}

Thus, it holds that $c^\prime (z) ( 1 - z ) - c(z) ( 1 - z )^\prime > 0$. 
Therefore, we get the following. 

\begin{equation} \label{eq:lemkey3}
c^\prime (z) ( z - 1 ) - c (z) < 0
\end{equation}

By \eqref{eq:lemkey1}, \eqref{eq:lemkey2} and \eqref{eq:lemkey3}, it holds that \eqref{eq:ddzfraction} is negative. 
Hence, the right-hand sid of \eqref{eq:goaloflem2b} is a decreasing function of $z$ ($0 < z < 1$). 
Therefore, \eqref{eq:goaloflem2a} is a decreasing function of $x$ ($0 < x < 1$).
\end{proof}

As examples, we show graphs of  
$c_{\vee, 4}(x) / p_{\vee, 4}(x)$ and  $c_{\vee, 4}^\prime (x) / p_{\vee, 4}^\prime (x)$ ($0.1 < x < 0.9$). 

\begin{figure}[H]
\begin{center}
\begin{minipage}[c]{.48\linewidth}
\includegraphics[width=\linewidth]{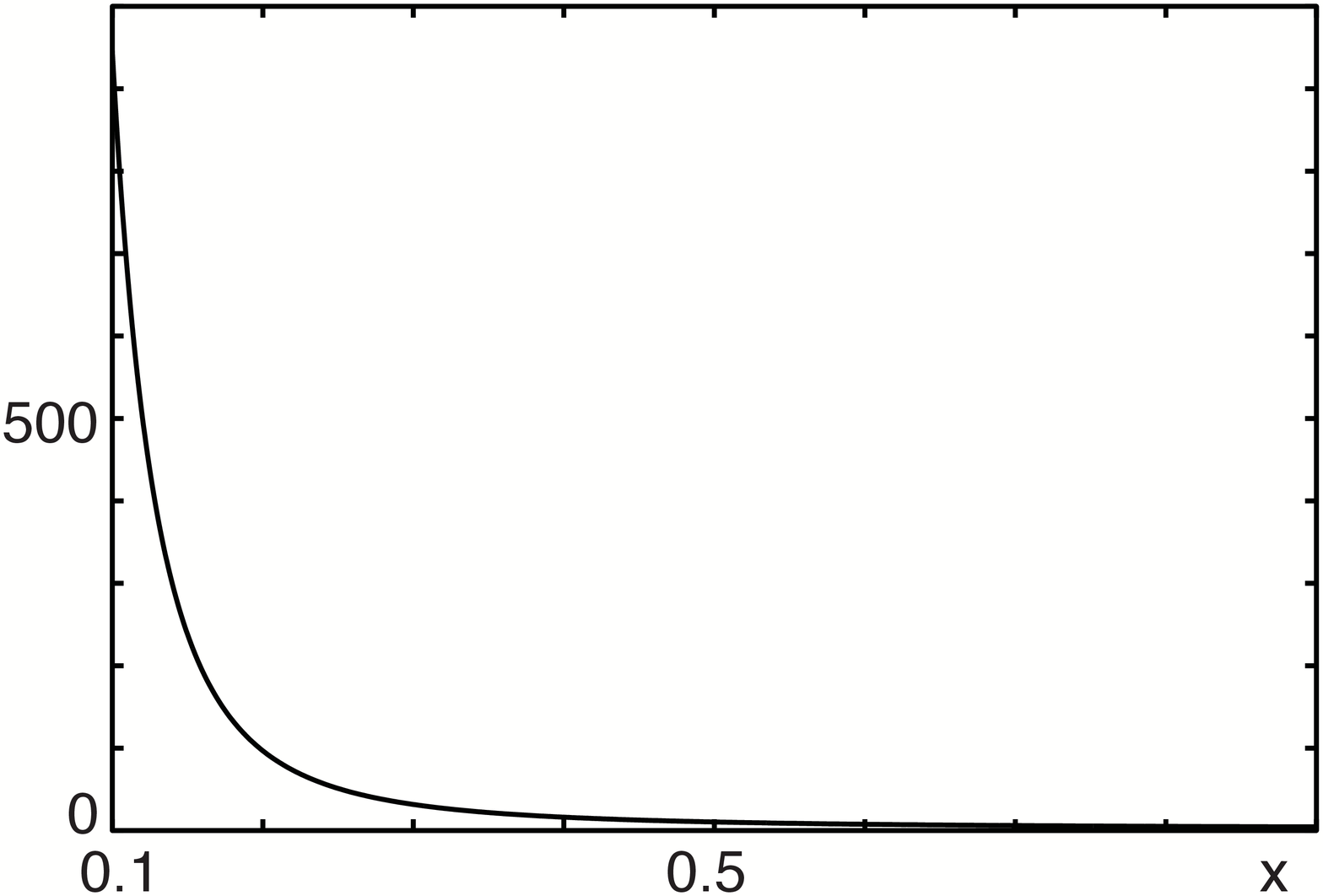}
\caption{$c_{\vee, 4}(x) / p_{\vee, 4}(x)$}
\end{minipage}
\begin{minipage}[c]{.48\linewidth}
\includegraphics[width=\linewidth]{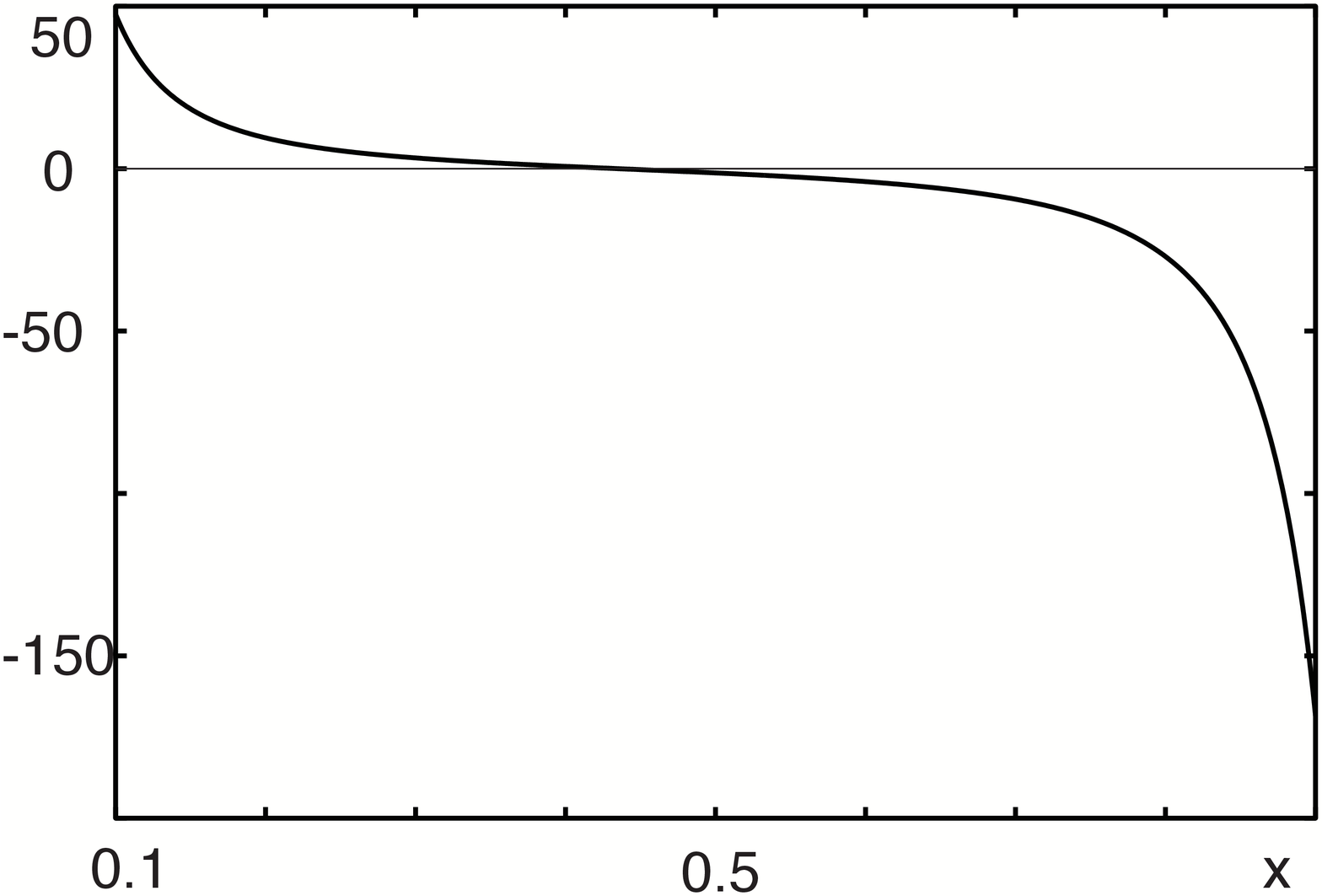}
\caption{$c_{\vee, 4}^\prime (x) / p_{\vee, 4}^\prime (x)$}
\end{minipage}
\end{center}
\end{figure}

\section{Main Theorem} \label{section:main}

\subsection{Constraint Extremum Problem}

We consider the following constraint extremum problem. 

\noindent
\textbf{Constraint Extremum Problem 1}

\textbf{Type of the Problem:} Maximization. 

\textbf{Parameters (Constants): } Let $h$ be a positive integer. And, let $r$ be a real number ($0 < r < 1$). 

\textbf{The Objective Function} 

\begin{equation} \label{eq:objectivefunctiona}
f(z, w) := c_{\vee, h} (p_{\vee, h}^{-1} (z)) + (1 - z) c_{\vee, h} (p_{\vee, h}^{-1} (w))
\end{equation}

\textbf{Side Conditions} 
\begin{align}
& 0 \leq w \leq z \leq r \label{eq:sidecondition11} 
\\ 
& (1 - z) (1 - w) = (1 - r) \label{eq:sidecondition12}
\end{align}

The side condition \eqref{eq:sidecondition11} is equivalent to the following: 
$1 - r \leq 1 - z \leq 1- w \leq 1$. 

It is easy to see that $f(z,w) < f(w,z)$ holds if and only if $w < z$. 
Hence, when we replace the objective function \eqref{eq:objectivefunctiona} by 
$\min \{ f(z, w), f(w,z) \} $, 
the resulting constraint extremum problem is equivalent to Constraint Extremum Problem 1. 

And, it is easy to see the following. 

\begin{equation} 
\min \{ f(z, w), f(w,z) \} = \min_{A_D} C(A_D, d)
\end{equation}

Here, the meaning of the right-hand side is as follows. 
We consider an AND-OR tree of height $h + 1 \geq 2$. 
And, $d$ is the ID defined as follows. The restrictions of $d$ to the left sub-tree and the right sub-tree are IIDs; The left child of the root has the probability $z$ and the other child of the root has the probability $w$. 

\begin{lemma} \label{lem:cep1}
Constraint Extremum Problem 1 has a unique solution 
$(z, w) = (1 - \sqrt{1-r}, 1 - \sqrt{1-r})$.
\end{lemma}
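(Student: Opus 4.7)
\textbf{Proof plan for Lemma~\ref{lem:cep1}.} My plan is to reduce the problem to a single-variable optimization via the equality constraint, then show the resulting one-variable function is strictly decreasing on its domain, forcing the maximizer to the boundary $z = w$. Solving \eqref{eq:sidecondition12} for $w$ gives $w = w(z) := 1 - (1-r)/(1-z)$, which is a decreasing involution of $[0, r]$ whose unique fixed point is $z_* := 1 - \sqrt{1-r}$. The side condition $w \leq z$ from \eqref{eq:sidecondition11} is then equivalent to $z \in [z_*, r]$. Writing $g(t) := c_{\vee, h}(p_{\vee, h}^{-1}(t))$ and $F(z) := f(z, w(z)) = g(z) + (1-z)\, g(w(z))$, it suffices to show $F$ is strictly decreasing on $[z_*, r]$; the unique maximizer will then be $z = z_*$, giving $(z, w) = (z_*, z_*)$.

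Differentiating the constraint gives $dw/dz = -(1-w)/(1-z)$, so a direct computation yields
\begin{equation*}
F'(z) = g'(z) - g(w) - (1-w)\, g'(w),
\end{equation*}
and the task reduces to showing $F'(z) < 0$ on $[z_*, r]$. The two lemmas translate naturally into this setting. Via the substitution $t = p_{\vee, h}(x)$, Lemma~\ref{lem:1} is equivalent to $g(t)/t$ being strictly decreasing in $t$, i.e., $t\, g'(t) - g(t) < 0$; Lemma~\ref{lem:2} is equivalent to $g'(t)$ being strictly decreasing in $t$, i.e., $g$ is strictly concave on $(0,1)$.

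At the symmetric endpoint $z = w = z_*$, one has $F'(z_*) = z_* g'(z_*) - g(z_*) < 0$ directly by Lemma~\ref{lem:1}. For $z \in (z_*, r)$, we have $w(z) < z$, and concavity (Lemma~\ref{lem:2}) gives $g'(z) < g'(w)$, whence
\begin{equation*}
F'(z) < g'(w) - g(w) - (1-w)\, g'(w) = w\, g'(w) - g(w) < 0,
\end{equation*}
where the final inequality is again Lemma~\ref{lem:1}. Hence $F$ is strictly decreasing on $[z_*, r]$, and the unique solution of Constraint Extremum Problem~1 is $(z, w) = (1 - \sqrt{1-r}, 1 - \sqrt{1-r})$.

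The main obstacle I anticipate is the bookkeeping in translating between the $x$- and $t$-parametrizations when invoking Lemmas~\ref{lem:1} and~\ref{lem:2}; once that is handled, the two monotonicity facts combine cleanly to give $F' < 0$. It is also worth noting that the extremum is a boundary extremum (the constraint $w \leq z$ is active at the optimum), so the lemma is not an interior critical-point statement amenable to a Lagrange-multiplier search; the strict monotonicity of $F$ is what does the work.
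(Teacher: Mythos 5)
Your proposal is correct and follows essentially the same route as the paper: reduce to one variable along the constraint $w = 1-(1-r)/(1-z)$, compute the derivative of $f(z,w(z))$, replace $g'(z)$ by the larger $g'(w)$ via Lemma~\ref{lem:2}, and conclude negativity from $w\,g'(w)-g(w)<0$, which is Lemma~\ref{lem:1} after the substitution $t=p_{\vee,h}(x)$. The only cosmetic differences are your explicit check at the symmetric endpoint and the observation that $w(z)$ is an involution, neither of which changes the argument.
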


\begin{proof}
Let $c(u): = c_{\vee, h} (p_{\vee, h}^{-1} (u))$ ($0 \leq u \leq r$). 
We define $f_1 (z)$ ($1 - \sqrt{1-r} \leq z \leq r$) as to be $f(z,\omega (z))$, where $\omega (z) := 1 - (1-r)/(1-z)$. 
Thus, we have the following. 

\begin{equation}
f_1 (z)= c(z) + (1 - z) c ( \omega (z))  
\end{equation}

Here, we have $\omega^\prime (z) = -(1-r)/(1-z)^2$. 
Thus, the followings hold. 

\begin{align}
f_1^\prime (z) 
&= c^\prime (z)  - c ( \omega (z)) + c^\prime ( \omega (z) ) (-1) \frac{1-r}{1-z} \notag \\
&= c^\prime (z)  - c ( \omega (z)) + c^\prime ( \omega (z) ) ( \omega (z) - 1) \label{eq:foneprime}
\end{align}

Our goal is to show that $f_1^\prime (z) $ is negative. 
We consider a variable transformation $t = p_{\vee, h}^{-1} (u)$ ($0 \leq u \leq r$). 
Then $dt/du= 1/(du/dt) = 1/(p_{\vee, h}^\prime (t)) > 0$. 
Hence, by Lemma~\ref{lem:1}, the following holds. 

\begin{equation}
\frac{d}{du} \Bigl{(} \frac{c(u)}{u} \Bigr{)} 
= \frac{d}{dt} \Bigl{(} \frac{c_{\vee, h}(t)}{p_{\vee, h}(t)} \Bigr{)} \frac{dt}{du} < 0
\end{equation}

Therefore, in the derived function of $c(u)/u$, the numerator is negative. 
Thus, we have $c^\prime (u)  u -  c (u) < 0$. 
And, the following holds $(0 \leq u \leq r)$. 

\begin{equation} 
c^\prime (u) - c(u) + c^\prime (u) (u - 1) < 0 
\end{equation}

In particular, the following holds $(1 - \sqrt{1-r} \leq z \leq r)$. 

\begin{equation} \label{eq:cprimewminuscplus}
c^\prime (\omega (z)) - c(\omega (z)) + c^\prime (\omega (z)) (\omega (z) - 1)  < 0 
\end{equation}

On the other hand, since $z \geq \omega (z)$, we have $x := p_{\vee, h}^{-1} (z) \geq y := p_{\vee, h}^{-1} (\omega (z)) $. 
By Lemma~\ref{lem:2}, we have the following. 

\begin{equation}
c^\prime (z)
=
\frac	{ \dfrac{dc(z)}{dx} }{\quad \dfrac{dz}{dx} \quad}
=
\frac{ c_{\vee, h}^\prime (x) }{ p_{\vee, h}^\prime (x) }
\leq 
\frac{ c_{\vee, h}^\prime (y) }{ p_{\vee, h}^\prime (y) }
=
c^\prime ( \omega (z) ) 
\end{equation}

Therefore, by \eqref{eq:cprimewminuscplus}, the following holds. 

\begin{equation} \label{eq:thekeyinequality}
c^\prime (z) - c(\omega (z)) + c^\prime (\omega (z)) (\omega (z) - 1)  < 0 
\end{equation}

By \eqref{eq:foneprime} and \eqref{eq:thekeyinequality}, 
it holds that $f_1^\prime (z) < 0$, in the interval $1 - \sqrt{1-r} < z < r$. 

Hence, the unique solution to Constraint Extremum Problem 1 is 
achieved at $(z, w) = (1 - \sqrt{1-r}, 1 - \sqrt{1-r})$. 
\end{proof}

\subsection{Replacement of Sub-trees}

 \begin{proposition} \label{prop:4main} 
Suppose that $T_0$ is an AND-OR tree or an OR-AND tree. Let $i \in \{ 0,1 \}$. And, suppose that $d_0$ is an ID on $T_0$ such that the probability of the root (having the value 0) is $1 - i$; Thus, the root has the value $i$. 
Let $d_1$ be the IID such that every leaf has the probability $i$. 

$(1)$ The following holds. 
\begin{equation} \label{eq:prop4main}
\min_{A_D} C(A_D, d_0) = \min_{A_D} C(A_D, d_1) 
\end{equation} 

$(2)$ 
Let $h$ be the height of the tree. 
In the case where $h$ is even $($denote it by $2k)$, the value of \eqref{eq:prop4main} is $2^k$. 

$(3)$
In the case where $h$ is odd $($denote it by $2k+1)$, 
\begin{equation}
(\mbox{The value of \eqref{eq:prop4main}})
=
\begin{cases}
2^k & \mbox{If $i = 0$ and the root is AND-gate, }
\\
 & \mbox{or $i = 1$ and the root is OR-gate}
\\
2^{k+1} & \mbox{Otherwise.}
\end{cases}
\end{equation}
 \end{proposition}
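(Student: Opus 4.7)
My plan is to treat parts (1), (2), and (3) together by induction on the height $h$ of $T_0$. I will define $N(T_0, i)$ to be the common value that \eqref{eq:prop4main} should take; since $i \in \{0,1\}$, the IID $d_1$ is concentrated on the single assignment in which every leaf equals $i$, so $\min_{A_D} C(A_D, d_1)$ is simply the deterministic cost of alpha-beta pruning on that constant-$i$ assignment. A straightforward induction on $h$ using the recursions
\[
N(T, 0) = N(T_A, 0), \qquad N(T, 1) = 2\, N(T_A, 1) \quad (\text{AND-root}),
\]
together with the dual relations for an OR-root (and base $N(\text{leaf}, i) = 1$), both computes that deterministic cost as $N(T_0, i)$ and verifies that $N(T_0, i)$ matches the closed forms $2^k$ or $2^{k+1}$ asserted in (2) and (3).

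The remaining content of (1) is the identity $\min_{A_D} C(A_D, d_0) = N(T_0, i)$, which I would split into a lower and an upper bound. For the lower bound I plan to use the standard certificate argument: when a deterministic alpha-beta algorithm halts on an input, the set of probed leaves is a certificate that forces the reported root value, so its cardinality is at least the minimum root-$i$ certificate size. That minimum certificate size satisfies exactly the same recursion as $N(T_0, i)$, so every run of every algorithm on a root-$i$ assignment costs at least $N(T_0, i)$; since $d_0$ is supported on root-$i$ assignments, averaging gives $\min_{A_D} C(A_D, d_0) \ge N(T_0, i)$.

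For the matching upper bound I would invoke the induction hypothesis on the two subtrees $T_A, T_B$. In the representative case of an AND-root with $i = 0$, independence together with $P(\text{root}=0)=1$ forces $(1-p_A)(1-p_B)=0$, so WLOG $p_A = 1$; the marginal $d_0|_{T_A}$ then has root value $0$ almost surely, and the induction hypothesis supplies an alpha-beta algorithm $A_A$ on $T_A$ with expected cost $N(T_A, 0)$. Splicing $A_A$ into a procedure on $T_0$ that legitimately skips $T_B$ once the AND-root sees the value $0$ from $T_A$ yields expected cost $N(T_A, 0) = N(T_0, 0)$. For an AND-root with $i = 1$ one has $p_A = p_B = 0$, so both subtrees are identically $1$ and the IH-supplied algorithms are run in sequence; the OR-root cases are dual.

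The main obstacle will be keeping the bookkeeping clean enough that the lower bound (from minimum certificate size), the upper bound (from the spliced inductive algorithm), and the value of $\min_{A_D} C(A_D, d_1)$ (from alpha-beta on the constant-$i$ assignment) all collapse to the same quantity $N(T_0, i)$. This ultimately reduces to the observation that minimum certificate size, the cost on the constant-$i$ input, and the recursive cost produced by the upper-bound algorithm all satisfy the same recursion; once that coincidence is in place, (2) and (3) fall out by unfolding the recursion for even and odd $h$.
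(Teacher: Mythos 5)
Your proof is correct and follows exactly the route the paper has in mind but omits: the paper's entire ``proof'' is the one-line remark that the proposition ``is easily shown by induction on the height,'' and your induction (certificate lower bound plus spliced inductive algorithm for the upper bound, with the recursion $N(T,0)=N(T_A,0)$, $N(T,1)=2N(T_A,1)$ at an AND root and its dual) is the natural filling-in of that claim. One remark: you read $d_1$ as concentrated on the assignment in which every leaf has value $i$ (i.e.\ leaf probability $1-i$ under the paper's convention that ``probability'' means the probability of having value $0$); this is indeed the intended reading and the one used later in the proof of the main theorem, since the literal ``every leaf has probability $i$'' would make part $(1)$ fail at odd heights (height $3$, AND root, $i=0$: the all-ones input costs $4$, not $2^k=2$), so your interpretation is the correct one.
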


The above proposition is easily shown by induction on the height. 

 \begin{theorem} \label{thm:main}
 $($\textbf{Main Theorem}$)$ 
Suppose that $T$ is a uniform binary AND-OR tree. 
Suppose that $r$ is a real number such that $0 < r < 1$. 
Now, consider the following set.
\begin{equation} \label{eq:rIDs}
\{ \delta : \delta \mbox{ is an ID on $T$ such that the probability of the root is } r. \}
\end{equation} 

And, suppose that $d$ is in the set \eqref{eq:rIDs} and that $d$ satisfies the following. 
\begin{equation} \label{eq:eigen4rid}
\min_{A_D} C(A_D, d) = \max_{\delta : \mathrm{ID}, r} \min_{A_D} C(A_D, \delta) 
\end{equation}
Here, $\delta$ runs over all elements of \eqref{eq:rIDs}. 
Then, $d$ is an IID. 
\end{theorem}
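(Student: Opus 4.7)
The plan is induction on the height $h$, proving the theorem for AND-OR trees and its evident dual for OR-AND trees simultaneously (the two statements are equivalent under the bit-flip duality $0 \leftrightarrow 1$, which interchanges AND and OR gates, so each feeds the other in the inductive step). For the base case $h = 1$, the AND-OR tree is a single AND gate over two leaves; an ID with root probability $r$ assigns the leaves probabilities $z, w$ with $(1-z)(1-w) = 1-r$, and $\min_{A_D} C(A_D, d) = 2 - \max\{z, w\}$. Maximizing this under the constraint forces $z = w = 1 - \sqrt{1-r}$, so $d$ is an IID; the OR-AND base case is dual.

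For the inductive step, let $h \ge 2$ and write $T$ as an AND of two OR-AND sub-trees $T_L, T_R$ of height $h - 1$. Any ID $d$ decomposes into independent restrictions $d_L, d_R$, and because a deterministic alpha-beta algorithm on $T$ chooses one child to evaluate first,
\[ \min_{A_D} C(A_D, d) = \min\bigl\{ C_L + (1-z) C_R,\ C_R + (1-w) C_L \bigr\}, \]
where $C_L = \min_{A_D} C(A_D, d_L)$, $C_R = \min_{A_D} C(A_D, d_R)$, and $z, w$ are the root probabilities of $d_L, d_R$, subject to $(1-z)(1-w) = 1-r$. Since $r < 1$, both $1-z$ and $1-w$ are strictly positive, so any strict increase of $C_L$ (at fixed $z$) strictly increases \emph{both} arguments of the outer minimum and thus the minimum itself; symmetrically for $C_R$. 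Consequently, if $d$ is eigen, then $d_L$ must be eigen among IDs on $T_L$ of root probability $z$, and $d_R$ must be eigen among IDs on $T_R$ of root probability $w$. By the induction hypothesis applied to OR-AND trees of height $h - 1$, both $d_L$ and $d_R$ are IIDs, with leaf probabilities $p_{\vee, h-1}^{-1}(z)$ and $p_{\vee, h-1}^{-1}(w)$ respectively.

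With this reduction, the minimum cost of $d$ equals $\min\{f(z, w), f(w, z)\}$ in the notation of Constraint Extremum Problem~1 (with the parameter $h$ there taken as $h - 1$). The side conditions $0 \le w \le z \le r$ and $(1-z)(1-w) = 1-r$ are automatic: relabel so that $w \le z$; then $1-z \ge (1-z)(1-w) = 1-r$ yields $z \le r$, and the product condition is exactly the root-probability constraint. Lemma~\ref{lem:cep1} pins the unique maximizer to $(z, w) = (1 - \sqrt{1-r},\ 1 - \sqrt{1-r})$, so $d_L$ and $d_R$ are IIDs with identical leaf probabilities, and $d$ is globally an IID, as required.

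The step I expect to be most delicate is the reduction from global to local optimality on each sub-tree: it requires not merely that improving $d_L$ alone does not decrease the minimum but that it strictly increases both of its arguments, and this rests on the strict positivity $1-z, 1-w > 0$ afforded by $r < 1$. Once this reduction is in hand, everything else is bookkeeping on top of Lemma~\ref{lem:cep1}, whose own proof is where the real analytic content---the monotonicity Lemmas~\ref{lem:1} and~\ref{lem:2}---does the work.
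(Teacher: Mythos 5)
Your overall route is exactly the paper's: induct on the height, show that eigen-ness passes to the two sub-tree restrictions, reduce to Constraint Extremum Problem~1 once both restrictions are IIDs, and finish with Lemma~\ref{lem:cep1}. Your explicit justification of the first reduction (the decomposition $\min\{C_L+(1-z)C_R,\ C_R+(1-w)C_L\}$ together with strict positivity of $1-z$ and $1-w$) is a step the paper leaves implicit, and it is fine; the depth-first form of the optimal cost is taken for granted by the paper as well, so you are not below its level of rigor there.

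There is, however, one genuine gap: the boundary case $w=0$ (equivalently $z=r$), which is perfectly compatible with the side condition $(1-z)(1-w)=1-r$. Your sentence that ``by the induction hypothesis both $d_L$ and $d_R$ are IIDs'' is unjustified for $d_R$ in that case, because the induction hypothesis, like the theorem itself, covers only root probabilities strictly between $0$ and $1$ --- and this is not a merely formal restriction: an ID on $T_R$ whose root has probability $0$ need not be an IID, yet it is still optimal among all such IDs, since by Proposition~\ref{prop:4main} every ID with root probability $0$ (or $1$) has the same minimum cost. Hence your local-eigen-ness reduction does not force $d_R$ to be an IID when $w=0$, and the appeal to Lemma~\ref{lem:cep1} breaks down at exactly that point. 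The paper closes this case by invoking Proposition~\ref{prop:4main}: replace $d_R$ by an IID with the same root probability without changing $\min_{A_D}C(A_D,d)$, so the modified $d$ is still eigen and has IID restrictions; Lemma~\ref{lem:cep1} then forces $z=w=1-\sqrt{1-r}>0$, which contradicts $w=0$, so the boundary case cannot be eigen and the main line of your argument applies. Note that the other boundaries are automatic: $(1-z)(1-w)=1-r>0$ gives $z<1$ and $w<1$, and $z=0$ (with $w\leq z$) would force $r=0$; so $w=0$ is the only case needing this patch.
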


\begin{proof} 
We prove the theorem by an induction on the height $h$. 
The case of $h = 1$ is easy. 
As an induction step, we consider the case where $h \geq 2$. 

Let $d_L$ ($d_R$, respectively) denote the restriction of $d$ to the left (right) sub-tree. 
And, let $z$ ($w$, respectively) be the probability of the left (right) child of the root. Without loss of generality,  we may assume $w \leq z$. 
It is easy to see that $0 < z < 1$.  
Therefore, by the induction hypothesis, $d_L$ is an IID. 

In the case of $w > 0$, we have $0 < w < 1$. Thus, in the same way as above, $d_R$ is an IID, too.  

In the case of $w = 0$, we do not know whether $d_R$ is an IID. However, by Proposition~\ref{prop:4main}, we can replace $d_R$ by an IID. And, after the replacement, $d$ still satisfies \eqref{eq:eigen4rid}. 

Thus, regardless of whether $w$ is positive or not, we may assume that both of $d_L$ and $d_R$ are IIDs.  Therefore, $(z, w)$ is a solution to Constraint Extremum Problem 1. 
By Lemma~\ref{lem:cep1}, it holds that $z=w$. 
Hence, $d$ is an IID. 
\end{proof}

In Theorem~\ref{thm:main}, the assumption of $0 < r < 1$ is optimal. 
In the case where $r=0$ or $1$, by Proposition~\ref{prop:4main}, there exists an element 
$d$ of the set \eqref{eq:rIDs} such that \eqref{eq:eigen4rid} holds but $d$ is not an IID.

\section{Proof of the Original Theorem} \label{section:original}

The following theorem is asserted in \cite{LT07} without a proof. 
Now, we show it by using our main theorem. 

\noindent
\textbf{Theorem}~\ref{thm:lt4}
\textit{
 $($Liu and Tanaka, Theorem 4 of $\cite{LT07})$ 
Suppose that $T$ is an AND-OR tree. 
And, suppose that $d$ is an ID and that $d$ satisfies the following. 
\begin{equation} \label{eq:eigen4id}
\min_{A_D} C(A_D, d) = \max_{\delta :  \mathrm{ID}} \min_{A_D} C(A_D, \delta) 
\end{equation}
Here, $\delta$ runs over all IDs. Then, $d$ is an IID. 
}

\begin{proof} 
It is enough to show that the probability of the root in $d$ is neither 0 nor 1. 
Then, the proof of the theorem is reduced to our main theorem. 

\textbf{Case 1:} The height $h$ is even. Let $h = 2k$.  
Let $c(z) := c_{\wedge, 2k} ( p_{\wedge, 2k}^{-1} (z))$. 
Then, we have $c(z) > c(0) = c(1)$ in the interval $0 < z < 1$; 
Baudet observed almost same thing \cite[eq.(3.14)]{Ba78}. 
A direct proof is as follows. 
By Proposition~\ref{prop:4main}, it holds that $c (0) = 2^k = c(1)$. 
On the other hand, in the same way as \eqref{eq:lemkey2}, it holds that $c^{\prime \prime} (z) < 0$ ($0 < z < 1$). Therefore, $c(z) > c(0) = c(1)$ in the interval $0 < z < 1$. 
Hence, the probability of the root in $d$ is neither 0 nor 1. 

\textbf{Case 2:} Otherwise. Let the height be $h = 2k + 1$. The case of $k=0$ is easy.  
In the remainder of Case 2, assume $k \geq 1$. 

Let $f(x)=x^6 + 2x^5 - 2x^4 -6x^3 -3x^2 +2$. It is easy to see that, in the interval $0<x<1$,  the equation $f(x)=0$ has a unique solution. Let $x=\alpha$ be the solution. 
Then $\alpha < \forall x < 1 \quad f(x) < 0$.
We are going to show the following, by induction on $k \geq 1$. 

\begin{equation} \label{eq:alpha}
\alpha < \forall x < 1 \quad c_{\vee, 2k+1} (x) > 2^{k+1} 
\end{equation}

The case of $k=1$ is shown as follows. 
\begin{align}
c_{\wedge, 2} (x) &= (1 + x) (2 - x^2) \label{eq:andorh2cost}
\\
p_{\wedge, 2} (x) &= -x^2 ( x^2 - 2 ) \label{eq:andorh2prob}
\end{align}
\begin{align}
c_{\vee, 3} (x) - 2^2 &= c_{\wedge, 2} (x) ( 1+ p_{\wedge, 2} (x) ) - 2^2 \notag 
\\
&= (x - 1)f(x)
\end{align}

In the interval $\alpha < x < 1$, it holds that $c_{\wedge, 3} (x) - 2^2 > 0$. 
Thus, \eqref{eq:alpha} holds in the case of $k=1$. 

Next, we look at induction step of \eqref{eq:alpha}. 
The followings hold. 

\begin{align}
c_{\vee, 2k+3} (x) &= c_{\wedge, 2k+2} (x) ( 1 + p_{\wedge, 2k+2} (x) ) \notag
\\
&= c_{\wedge, 2} (p_{\wedge, 2k} (x) ) c_{\wedge, 2k} (x) 
( 1 + p_{\wedge, 2} ( p_{\wedge, 2k} (x) ) ) \notag
\\
&= c_{\wedge, 2k} (x) (1 + p_{\wedge, 2k} (x)) \times 
\frac
{c_{\wedge, 2} (p_{\wedge, 2k} (x) ) 
( 1 + p_{\wedge, 2} ( p_{\wedge, 2k} (x) ) )}
{1 + p_{\wedge, 2k} (x)} \notag
\\
&= c_{\vee, 2k+1} (x) \times 
\frac
{c_{\wedge, 2} (p_{\wedge, 2k} (x) ) 
( 1 + p_{\wedge, 2} ( p_{\wedge, 2k} (x) ) )}
{1 + p_{\wedge, 2k} (x)} 
\label{eq:original_odd1}
\end{align}

Here, in the interval $0 \leq t \leq 1$, the following is easily verified. 
\begin{equation}
\frac{c_{\wedge, 2} (t) ( 1 + p_{\wedge, 2} (t) )}{1 + t} 
= t^2 (1 - t^2)(3 - t^2) + 2
\label{eq:original_odd2}
\end{equation}

By \eqref{eq:original_odd1}, \eqref{eq:original_odd2} and by the induction hypothesis 
$c_{\vee, 2k+1} (x) > 2^{k+1} \,\,\, (\alpha < x < 1)$, it holds that 
$c_{\vee, 2k+3} (x) > 2^{k+2}  \,\,\, (\alpha < x < 1)$. 

Thus, we have shown \eqref{eq:alpha} for all positive integer $k$. 
Therefore, it holds that $c_{\wedge, 2k+1} (x) > 2^{k+1} $ in the interval $0 < x < 1 - \alpha$. 

Therefore, by Proposition~\ref{prop:4main}, the probability of the root in $d$ is neither 0 nor 1. 
\end{proof} 

\section{Comparison with Correlated Distributions} \label{section:comparison}

In \cite{LT07}, the main motive for Theorem~\ref{thm:lt4} is the following. 

 \begin{theorem} \label{thm:lt9}
 $($Liu and Tanaka. A corollary to Theorem 9 of $\cite{LT07})$ 
Suppose that $T$ is an AND-OR tree whose height is positive and even. 
The equilibrium among all IDs is strictly smaller than the equilibrium among all ditributions. 
To be more precise, the following holds.  

\begin{equation} \label{eq:lt9}
\max_{d :  \mathrm{ID}} \min_{A_D} C(A_D, d )
<
\max_d \min_{A_D} C(A_D, d ) 
\end{equation}

In the left-hand side, $d$ runs over all IDs. 
In the right-hand side, $d$ runs over all distributions. 
\end{theorem}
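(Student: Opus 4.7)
The plan is to combine Theorem~\ref{thm:lt4} with an explicit correlated distribution. By Theorem~\ref{thm:lt4}, every eigen-distribution among IDs on $T$ is an IID, so, writing $h = 2k$ for the height,
\begin{equation*}
\max_{d \,:\, \mathrm{ID}} \min_{A_D} C(A_D, d) \;=\; \max_{0 \leq x \leq 1} c_{\wedge, 2k}(x) \;=:\; M_k.
\end{equation*}
It therefore suffices to exhibit, for every $k \geq 1$, a distribution $e$ on $T$---necessarily correlated, since among IDs the maximum is exactly $M_k$---with $\min_{A_D} C(A_D, e) > M_k$.

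For the base case $h = 2$, take $e$ to be the uniform distribution on the four truth assignments in which each OR-pair of leaves contains exactly one $0$ and one $1$. Under $e$ the root has value $1$ with probability one, so both OR subtrees must be fully evaluated; a short case analysis exploiting the symmetry of $e$ shows every deterministic algorithm has expected cost exactly $3$. A routine maximization of $c_{\wedge, 2}(x) = (1+x)(2-x^2)$ yields $M_1 = (34 + 14\sqrt{7})/27 < 3$, settling $k = 1$.

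For $h = 2k \geq 4$ a first idea is a recursive lift: view $T$ as a height-$2$ AND-OR whose four super-leaves are the depth-$2$ subtrees, place the base-case distribution on the super-leaves, and realize each super-leaf's prescribed $0$/$1$ value by an ID of minimum cost $2^{k-1}$ (available by Proposition~\ref{prop:4main}). The main obstacle is that this naive recursion attains only $3 \cdot 2^{k-1}$, which at $k = 2$ is already smaller than $M_2 \approx 6.9$, so a direct induction on this template will not close the gap. A more robust route is to combine Yao's principle with the Saks--Wigderson randomized lower bound for the uniform binary AND-OR tree: that lower bound, converted via Yao, supplies a distribution whose min-cost equals the randomized Las Vegas complexity, of order $c^{h}$ with $c = (1+\sqrt{33})/4 \approx 1.686$, whereas $M_k$ grows with strictly smaller exponential base (as the base case $M_1 < 3$ already hints). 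The desired strict inequality then holds for every $k \geq 1$, completing the proof.
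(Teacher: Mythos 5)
Your reduction of the left-hand side to $M_k := \max_{0 \le x \le 1} c_{\wedge,2k}(x)$ via Theorem~\ref{thm:lt4} is fine, and your height-$2$ case is correct: the uniform distribution on the four reluctant assignments with root value $1$ forces expected cost exactly $3$, while $M_1 = (34+14\sqrt{7})/27 < 3$. But the general even height is exactly where your argument has a genuine gap. You concede that the recursive lift only reaches $3\cdot 2^{k-1}$, which already loses at $h=4$; and the fallback via Yao's principle plus the Saks--Wigderson bound \cite{SW86} cannot close it as stated. That bound is an order-of-growth statement, $\Theta\bigl(((1+\sqrt{33})/4)^{h}\bigr)$, with unspecified constants, so it does not yield a strict inequality at each \emph{fixed} even height: at $h=4$ one needs actual numbers (a lower-bound constant of, say, $1/2$ would give roughly $4$, well below $M_2 \approx 6.9$). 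Moreover, the companion claim that $M_k$ grows with a strictly smaller exponential base is asserted, not proved; it needs an argument (and is delicate for small $k$: $M_1 \approx 2.63$ already exceeds the ``critical IID'' per-two-level factor $\xi^{-2} \approx 2.618$, $\xi = (\sqrt{5}-1)/2$), and even if proved it would only settle all sufficiently large $k$, not every $k \ge 2$.

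For comparison: the paper does not prove Theorem~\ref{thm:lt9} at all --- it is quoted as a corollary to Theorem 9 of \cite{LT07} --- and the paper's own neighbouring argument (the constrained analogue, Theorem~\ref{thm:idvscorelated}) proceeds differently from your fallback: it fixes an optimal algorithm $A_0$ for the ID eigen-distribution $d_0$ (an IID, by the main theorem) and compares $C(A_0,d_0)$ directly with the cost of the uniform distributions on the $0$-set and $1$-set (their mixture $d_{\mathrm{mix}}$), whose costs are known explicitly \cite[Lemma 3]{SN12}. The natural repair of your proof is precisely the height-general version of your base case: take the uniform distribution on the $1$-set of the height-$2k$ tree, compute or lower-bound its minimum deterministic cost (it satisfies an explicit recursion), and compare it with $M_k$ exactly rather than asymptotically.
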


The above result extends to the case where the probability of the root is fixed. 
To be more precise, the following holds. 

\begin{theorem} \label{thm:idvscorelated}
Suppose that $T$ is an AND-OR tree whose height is greater than or equal to 2. 
Let $r$ be a real number such that $0 \leq r \leq 1$. 
Now, we restrict ourselves to the distribution such that the probability of the root is $r$. 
Then, the equilibrium among all IDs is strictly smaller than the equilibrium among all ditributions.  

\begin{equation} \label{eq:idvscorrelated}
\max_{d : \mathrm{ID}, r} \min_{A_D} C(A_D, d )
<
\max_{d : r} \min_{A_D} C(A_D, d ) 
\end{equation}

In the left-hand side, $d$ runs over all IDs such that the probability of the root is $r$. 
In the right-hand side, $d$ runs over all distributions such that the probability of the root is $r$.  
\end{theorem}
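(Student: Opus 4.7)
The plan is to evaluate the left-hand side of \eqref{eq:idvscorrelated} via Theorem~\ref{thm:main} and then to exhibit a correlated distribution with root probability exactly $r$ whose minimum cost strictly exceeds this value. By Theorem~\ref{thm:main} (together with Proposition~\ref{prop:4main} at the boundary $r \in \{0,1\}$), for every $r \in [0,1]$ the LHS is attained by the unique IID $d_r^{\text{IID}}$ whose leaf probability $x_r$ satisfies $p_{\wedge,h}(x_r) = r$; its value is the explicit number $c_{\wedge,h}(x_r)$, computable from the recurrences established in Section~\ref{section:lemmas}.

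To construct a correlated distribution $d_r^{\text{cor}}$ that beats this, I would work locally. Since $h \geq 2$, fix a height-2 subtree $T'$ of $T$ and modify only the joint distribution on the four leaves of $T'$, while keeping the marginal probability at the root of $T'$ equal to what $d_r^{\text{IID}}$ assigns to it. Because the value at the root of $T$ depends on the leaves of $T'$ only through the value at the root of $T'$, the probability of the root of $T$ remains $r$, so $d_r^{\text{cor}}$ still belongs to the right-hand side of \eqref{eq:idvscorrelated}. Within $T'$ I would impose the style of correlation that Liu and Tanaka use for Theorem~\ref{thm:lt9}, rescaled so that the marginal at the root of $T'$ matches the target: mass is redistributed among the four joint assignments to $T'$'s leaves so as to favour configurations in which alpha-beta cut-offs fail, while the four single-leaf marginals are kept equal so that the perturbation does not single out any one leaf.

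The main obstacle is the strict inequality. For every deterministic alpha-beta algorithm $A_D$ one must check that the expected cost contributed by probes inside $T'$ is strictly larger under $d_r^{\text{cor}}$ than under $d_r^{\text{IID}}$, and that this gap is not washed out by the probability that $A_D$ actually enters $T'$. The latter is positive for some $A_D$ because, for interior $r$, the marginal probabilities at every ancestor of $T'$ lie strictly between $0$ and $1$, so no deterministic cut-off can skip $T'$ with probability $1$. The degenerate sub-case in which the marginal at the root of $T'$ is $0$ or $1$ arises only for $r \in \{0,1\}$; there I would relocate the surgery to a subtree closer to the root, using Proposition~\ref{prop:4main} as the baseline and writing down an explicit correlated distribution on the top two levels that forces strictly more probes than the $2^k$ or $2^{k+1}$ ceiling furnished by Proposition~\ref{prop:4main}.
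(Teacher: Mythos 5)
Your construction is genuinely different from the paper's, but as it stands it has a gap at the decisive step. To conclude that the right-hand side of \eqref{eq:idvscorrelated} exceeds the left-hand side you must show that \emph{every} deterministic algorithm on $T$ has cost against $d_r^{\mathrm{cor}}$ strictly above $\min_{A_D}C(A_D,d_r^{\mathrm{IID}})$, and your inside/outside decomposition (``probes inside $T'$ cost strictly more, the rest is unchanged'') is only automatic for algorithms that evaluate $T'$ as a contiguous block and whose behaviour outside $T'$ depends on $T'$ only through its value. The paper's cost $C(A_D,d)$ ranges over un-directional algorithms as well: such an algorithm may interleave probes inside and outside $T'$, its later probes outside $T'$ may depend on the individual leaf values it has seen inside $T'$ (not merely on the value of $T'$'s root), so the expected outside cost need not be unchanged when you alter the joint law inside $T'$; and it may exploit the correlation by partial probing --- after one leaf of $T'$ is read, its siblings are no longer independent of it, which can make finishing $T'$ \emph{cheaper} than under the IID for a suitably adaptive algorithm. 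So the assertion that the inside contribution rises for every $A_D$ is precisely the hard point and is not argued. Two further slips: the quantifier ``the probability that $A_D$ enters $T'$ \dots is positive for \emph{some} $A_D$'' is the wrong one (the claim is about $\min_{A_D}$, so you need control for every $A_D$); and the local fact you invoke --- that the rescaled Liu--Tanaka correlation with a prescribed marginal at the root of $T'$ strictly raises the minimum evaluation cost of a height-2 tree --- is itself the $h=2$ instance of the theorem being proved and would need its own (finite, but explicit) verification.

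For contrast, the paper avoids local surgery altogether: it takes the global mixture $d_{\mathrm{mix}}$ of the uniform distributions on the 0-set and the 1-set with weights $r:1-r$, which has root probability $r$ by construction, identifies the extremal ID $d_0$ on the left-hand side via Theorem~\ref{thm:main} when $0<r<1$ (and via Proposition~\ref{prop:4main} when $r\in\{0,1\}$), and then compares $C(A_0,d_0)$ with $C(A_0,d_{\mathrm{mix}})$ for an optimal $A_0$ against $d_0$, leaning on the known cost behaviour of the $i$-set distributions (\cite[Lemma 3]{SN12}); because those reluctant-input distributions are exactly the ones whose cost is insensitive to the choice of algorithm, the single comparison suffices and no claim about arbitrary interleaving algorithms on a perturbed product distribution is needed. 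If you want to rescue your local approach, you would have to add a lemma of the form ``against a distribution that is a product across subtrees, some optimal deterministic algorithm evaluates the correlated subtree contiguously and acts outside it only through its value''; nothing of that kind is available in the paper, and proving it is comparable in difficulty to the theorem itself.
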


The proof of the theorem is given by using $i$-sets. 
Saks and Wigderson \cite{SW86} introduced the concept of a reluctant input. 
A truth assignment is said to be reluctant if it satisfies the following two requirements. 
(1) If an AND gate has the value 0, then exactly one child has the value 0; 
(2) If an OR gate has the value 1, then exactly one child has the value 1. 

Liu and Tanaka \cite{LT07} introduced the concepts of 0-set and 1-set. 
0-set (1-set, respectively) is the set of all reluctant truth assignments such that 
the root has the value 0 (1, respectively). 
In \cite{LT07}, it is shown that the right-hand side of \eqref{eq:lt9} is achieved only by the uniform distribution on the 1-set.

\begin{proof} (of Theorem~\ref{thm:idvscorelated}. Sketch) 
Suppose that $d_0$ is an ID with the following two properties: The probability of the root is $r$; And, $\min_{A_D} C(A_D, d_0)$ equals to the left-hand side of \eqref{eq:idvscorrelated}. 
And, let $A_0$ be a deterministic algorithm such that 
$C(A_0, d_0 ) = \min_{A_D} C(A_D, d_0 )$. 

For each $i \in \{ 0,1 \}$, let $d_\mathrm{unif.}^i$ be the uniform distribution on the $i$-set. 
And, let $d_\mathrm{mix}$ be the mixed strategy of $d_\mathrm{unif.}^0$ and $d_\mathrm{unif.}^1$ with weights $r : 1 - r$. 

\textbf{Case 1:} $r = 0$ or $1$. By Proposition~\ref{prop:4main}, it holds that $C(A_0, d_0 ) < C(A_0, d_\mathrm{mix})$. 

\textbf{Case 2:} Otherwise.  Then $0 < r < 1$. By Theorem~\ref{thm:main}, $d_0$ is the IID with root having probability $r$. Then, it is not hard to see that $C(A_0, d_0 ) < C(A_0, d_\mathrm{mix})$. 

Thus, we have shown the theorem. 
\end{proof}

For more precise on the cost of uniform distributions on the $i$-sets, see \cite[Lemma 3]{SN12}. 

\section{Conclusive Remarks} \label{section:conclusiveremarks}

Is Theorem~\ref{thm:lt4} is not hard to show? 
We close the current paper by putting a remark that a brutal induction does not work for the proof of Theorem~\ref{thm:lt4}. 

Suppose that $x=x_0$ maximizes $c_{\wedge, 2} (x)$. 
Then, it is not hard to see that $x=x_0$ does not maximize $c_{\vee, 3} (x)$. 
Thus, when we prove Theorem~\ref{thm:lt4}, we cannot carelessly mislead the reader by asserting the following: 
``Assume the statement of the theorem holds for the height = $h$. We investigate the case of the height = $h+1$. By the induction hypothesis, the restriction of a given ID to the left sub-tree is obviously an IID. And, the same thing holds for the right sub-tree. Hence, by the method of Lagrange multipliers, the statement of the theorem for height = $h+1$ is immediately shown."

\section*{Acknowledgment}

The authors would like to thank Kousuke Ogawa and Masahiro Kumabe for helpful discussions.


\end{document}